\documentclass{article} 
\usepackage{iclr2026_conference,times}

\usepackage{amsmath,amsfonts,bm}

\def\eqref#1{equation~\ref{#1}}

\def\1{\bm{1}}

\DeclareMathAlphabet{\mathsfit}{\encodingdefault}{\sfdefault}{m}{sl}
\SetMathAlphabet{\mathsfit}{bold}{\encodingdefault}{\sfdefault}{bx}{n}

\usepackage{hyperref}
\usepackage{url}
\usepackage{graphicx}
\usepackage{booktabs}
\usepackage{amssymb, amsthm}
\usepackage{dsfont}
\usepackage{enumitem}
\usepackage{wrapfig}
\usepackage{multicol, multirow, array}
\usepackage{etoc}
\usepackage{adjustbox}

\title{The Offline-Frontier Shift: Diagnosing Distributional Limits in Generative Multi-Objective Optimization}

\iclrfinalcopy

\author{Stephanie Holly\thanks{Corresponding author holly@ml.jku.at.}\\
LIT AI Lab and Institute for Machine Learning \\
JKU Linz, Austria 
\And
Alexandru-Ciprian Z\u{a}voianu \\
School of Computing, Engineering and Technology \\
RGU, Aberdeen, Scotland 
\And
Siegfried Silber \\
Linz Center of Mechatronics GmbH \\
Linz, Austria
\And
Sepp Hochreiter \\
LIT AI Lab and Institute for Machine Learning \\
JKU Linz, Austria 
\And
Werner Zellinger \\
LIT AI Lab and Institute for Machine Learning \\
JKU Linz, Austria
}

\newtheorem{lemma}{Lemma}
\theoremstyle{remark}  
\newtheorem{remark}{Remark}

\begin{document}

\maketitle

\begin{abstract}
Offline multi-objective optimization (MOO) aims to recover Pareto-optimal designs given a finite, static dataset. Recent generative approaches, including diffusion models, show strong performance under hypervolume, yet their behavior under other established MOO metrics is less understood. We show that generative methods systematically underperform evolutionary alternatives with respect to other metrics, such as generational distance.
We relate this failure mode to the offline-frontier shift, i.e., the displacement of the offline dataset from the Pareto front, which acts as a fundamental limitation in offline MOO.
We argue that overcoming this limitation requires out-of-distribution sampling in objective space (via an integral probability metric) and empirically observe that generative methods remain conservatively close to the offline objective distribution.
Our results position offline MOO as a distribution-shift--limited problem and provide a diagnostic lens for understanding when and why generative optimization methods fail.
\end{abstract}

\section{Introduction}
Multi-objective optimization (MOO) aims to recover optimal trade-offs among competing objectives and has been extensively studied in evolutionary computation \citep{Zitzler1999HV, Deb2002, Ishibuchi2015}. In many real-world settings, objective evaluations are expensive or unavailable, and optimization must be performed using only an offline dataset \citep{Trabucco2022, Xue2024,chasparis2016optimization}. Recent advances in generative modeling motivate their use in offline MOO, enabling efficient sampling from high-dimensional spaces to produce diverse, high-quality designs.

Despite their potential, the behavior and limitations of generative models in MOO are not well understood. Prior evaluations have focused primarily on hypervolume \citep{Xue2024, Yuan2025ParetoFlow}, offering limited insight into when and why generative models fail under other metrics. In this work, we extend the evaluation of generative methods across additional metrics, with a focus on the role of the offline dataset and out-of-distribution designs. Our evaluations witness the discrepancy between the fact that generative models have been developed to reproduce the data distribution, while offline optimization requires out-of-distribution exploration in objective space.

\newpage
Our contributions are:
\begin{itemize}    
    \item We show that, despite strong hypervolume performance, generative methods \citep{Yuan2025ParetoFlow, Annadani2025PGD} underperform evolutionary methods on multiple benchmarks~\citep{Xue2024} in terms of alternative metrics, including generational distance and inverted generational distance (Section~\ref{sec:novelmetrics}).
    \item We introduce the \emph{offline-frontier shift} as a fundamental limiting factor (Lemma~\ref{lemma1}) and show that increasing shifts correspond to a stronger performance degradation for generative methods than for evolutionary alternatives  (Figure~\ref{fig1} and Section~\ref{sec:correlation}).
    \item Our empirical evaluations show that current generative methods tend to remain closer to the offline distribution compared to evolutionary methods in terms of a probability distance (Section~\ref{sec:novelmetrics}).
\end{itemize}

\begin{figure}[htbp]
  \centering
  \includegraphics[width=0.8\linewidth]{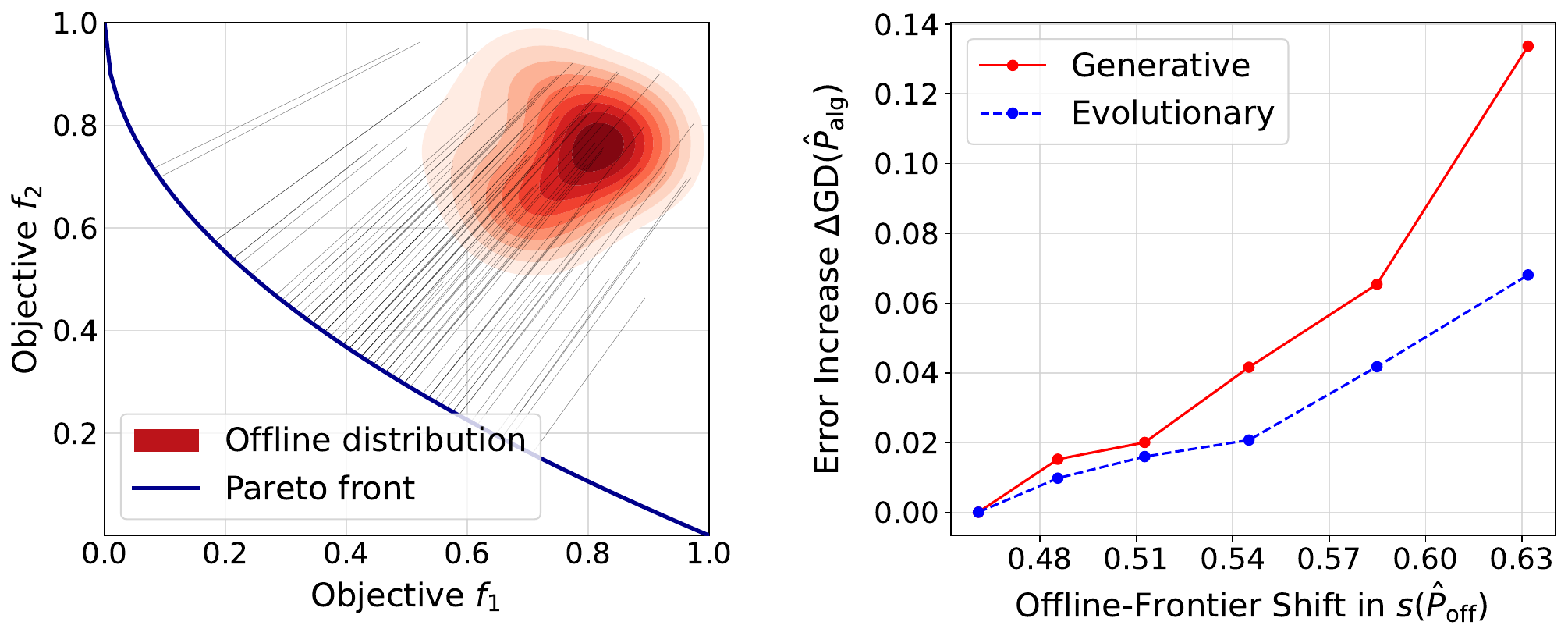}
  \caption{Left: offline-frontier shift (average projection distance; projections in gray). Right: error increase, showing that larger shifts lead to stronger degradation, especially for generative methods.}
  \label{fig1}
\end{figure}

\section{Problem}
\label{sec:problem}
\textbf{Offline MOO:} Consider a design space $\mathcal{X} \subset \mathbb{R}^{d}$ and a sequence $f_1,\ldots,f_m: \mathcal{X} \rightarrow \mathbb{R}$ of objective functions.
Given access only to an offline dataset $x_1,\ldots,x_n$, the goal of offline MOO is to find Pareto-optimal solutions, i.e., candidates $x^{\star}\in\mathcal{X}$ for which there is no other $x\in\mathcal{X}$ with $f_i(x) \le f_i(x^\star)\ \text{for all}\ i\in\{1,\ldots,m\}$ and $f_j(x) < f_j(x^\star)$ for some $j\in\{1,\ldots,m\}$. 

In this work, we follow the standard assumptions that (a) the offline dataset $x_1,\ldots,x_n$ is an independently and identically distributed (i.i.d.) sample from a probability measure $Q_\mathrm{off}$ on $\mathcal{X}$, and (b) the Pareto front (objective values of Pareto-optimal solutions) can be viewed as a manifold $\mathcal{M}$.

\textbf{Generative method:} The goal of a generative method is to generate i.i.d.~samples from a distribution $Q_\mathrm{alg}$ that is close to the distribution $Q_\mathrm{off}$ of a given dataset $x_1,\ldots,x_n$.

\textbf{Problem of this work:} We aim to diagnose possible improvements and distributional limitations of recent generative methods for offline MOO.
\vspace{.5em}
\begin{remark}[on methodological approach]
    Note that generative methods are developed for generating samples from $Q_\mathrm{alg}$ close to $Q_\mathrm{off}$, while their application to offline MOO requires identifying samples possibly far from the pushforward distribution $P_\mathrm{off}:={\bf f}_\ast(Q_\mathrm{off})$ in the objective space with respect to the objective mapping ${{\bf f}(x):=(f_1(x),\ldots,f_m(x))}$, see Figure~\ref{fig1}.
\end{remark}

\section{Related Work}
A common approach to offline optimization is to train a surrogate model on offline data and optimize it as a proxy for the unknown objective function \citep{Xue2024, Kim2025}. In single-objective settings, surrogates enable gradient-based optimization \citep{Trabucco2021, Qi2022IOM, Yuan2023ICT, Yu2021RoMA, Chen2023TriMentoring}, while in evolutionary computation they were first introduced for single-objective problems \citep{Jones1998} and later extended to multi-objective evolutionary algorithms (MOEAs) \citep{Knowles2006, jin2011, wang2018, yang2019}. Recent generative modeling approaches have shown strong promise for offline optimization \citep{Kumar2020Gen, Krishnamoorthy2023Gen, Mashkaria2023Gen}, but have largely focused on single-objective problems. Only limited work addresses offline multi-objective optimization, where flow- and diffusion-based models are used to guide sampling toward Pareto-optimal regions \citep{Yuan2025ParetoFlow, Annadani2025PGD}.

The offline setting is particularly relevant in real-world applications where function evaluations are expensive or time-consuming, such as engineering design \citep{Tanabe2020}. In this context, MOEAs are standard tools for electrical machine design in electrification and e-mobility \citep{chasparis2016optimization,Bramerdorfer2018, Cavagnino2018}. These problems are challenging due to the coupling of multiple physical domains, including electromagnetics, thermal effects, structural integrity, acoustics, and manufacturing constraints, resulting in complex objective landscapes that are typically evaluated via computationally intensive finite-element simulations \citep{Silber2018, Bramerdorfer2016} (see Appendix~\ref{sec:motor} for details). 

\section{The Offline-Frontier Shift}
\label{sec:shift}

We quantify the offline-frontier shift using the squared expected projection distance~\citep[projection geometry]{hastie1989principal} between the objective space offline distribution $P_\mathrm{off}$ and the Pareto front $\mathcal{M}$:
\begin{align}\label{def:shift}
    s(P_\mathrm{off}):= \mathbb{E}_{Y \sim P_\mathrm{off}}\Bigl[\left\lVert Y - \Pi_{\mathcal{M}}(Y) \right\rVert_2^2\Bigr],
\end{align}
where $\Pi_{\mathcal{M}}$ denotes the orthogonal projection onto $\mathcal{M}$.

Note that $s(P)$ is a natural choice for measuring the quality of a distribution $P$ in MOO, as it converges to the squared generational distance $\mathrm{GD}^2(Y_n,Z):=\frac{1}{n}\sum_{i=1}^n \min_{z \in Z}\| y_i - z \|_2^2$~\citep{Schutze2012}, where $Z$ is a finite discretization of $\mathcal{M}$; see Appendix~\ref{sec:generalizationGD} for details. GD is well-known in evolutionary computation~\citep{Lamont1999GD, Schutze2012}. 

\begin{lemma}
\label{lemma1}
    The objective space offline distribution $P_\mathrm{off}$ and the generated distribution $P_\mathrm{alg}$ satisfy
    \begin{align}
        \label{ineq:offline_frontier_shift}
        s(P_\mathrm{alg}) \geq s(P_\mathrm{off}) - d_{\mathcal{F}}(P_\mathrm{alg}, P_\mathrm{off})
    \end{align}
    for any integral probability metric $d_{\mathcal{F}}$ generated by a function class $\mathcal{F}$ that contains the measurable test function ${\ell_\mathcal{M}(y):=\left\lVert y - \Pi_{\mathcal{M}}(y) \right\rVert_2^2}$.
\end{lemma}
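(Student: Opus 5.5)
The inequality follows directly from the definition of an integral probability metric. Recall that for a class $\mathcal{F}$ of measurable real-valued functions on the objective space,
\begin{align*}
    d_{\mathcal{F}}(P, Q) = \sup_{g \in \mathcal{F}} \left| \mathbb{E}_{Y\sim P}[g(Y)] - \mathbb{E}_{Y\sim Q}[g(Y)] \right|.
\end{align*}
The first step is to write the offline-frontier shift as an expectation of the test function: by \eqref{def:shift}, $s(P) = \mathbb{E}_{Y\sim P}[\ell_\mathcal{M}(Y)]$ for any distribution $P$. This gives $s(P_\mathrm{alg})$ and $s(P_\mathrm{off})$ as integrals of the same function $\ell_\mathcal{M}$ against the two measures.

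The second step uses the hypothesis $\ell_\mathcal{M}\in\mathcal{F}$. Since the supremum dominates any single element of $\mathcal{F}$,
\begin{align*}
    s(P_\mathrm{off}) - s(P_\mathrm{alg}) \le \left| \mathbb{E}_{P_\mathrm{off}}[\ell_\mathcal{M}] - \mathbb{E}_{P_\mathrm{alg}}[\ell_\mathcal{M}] \right| \le d_{\mathcal{F}}(P_\mathrm{alg}, P_\mathrm{off}).
\end{align*}
Rearranging yields \eqref{ineq:offline_frontier_shift}. Symmetry of the absolute value lets us drop the sign and the order of the arguments.

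The only genuine obstacle is well-definedness, not the inequality itself. The projection $\Pi_\mathcal{M}$ may not be unique, so I would set $\ell_\mathcal{M}(y)=\operatorname{dist}(y,\mathcal{M})^2$, the squared distance to the closure of $\mathcal{M}$. This is $1$-Lipschitz before squaring, hence continuous and measurable, and it is nonnegative, so both expectations exist in $[0,\infty]$. If $s(P_\mathrm{alg})=\infty$, the claim is trivial. If $s(P_\mathrm{off})=\infty$ while $s(P_\mathrm{alg})<\infty$, the difference of expectations is infinite, so $d_\mathcal{F}=\infty$ and the right-hand side is read as $-\infty$, or excluded by the standing convention that IPM expectations are finite. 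With these conventions recorded, the two displayed steps complete the proof.
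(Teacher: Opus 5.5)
Your proof is correct and is essentially the paper's argument. Both write $s(P)=\mathbb{E}_{Y\sim P}[\ell_\mathcal{M}(Y)]$, bound $s(P_\mathrm{off})-s(P_\mathrm{alg})$ by its absolute value and then by the supremum over $\mathcal{F}\ni\ell_\mathcal{M}$, and use the symmetry of $d_\mathcal{F}$. Your well-definedness remarks (using the squared distance to the closure of $\mathcal{M}$, and the conventions for infinite $s$) are extra care the paper omits, but they do not change the argument.
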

\begin{proof}
    By the triangle inequality, we obtain
    \begin{align*}
        s(P_\mathrm{off}) &= s(P_\mathrm{off}) + s(P_\mathrm{alg}) - s(P_\mathrm{alg})\\
        &\leq s(P_\mathrm{alg}) + \left|s(P_\mathrm{off})-s(P_\mathrm{alg})\right|\\
        &= s(P_\mathrm{alg}) + \left|\mathbb{E}_{Y \sim P_\mathrm{off}}\Bigl[\ell_\mathcal{M}(Y)\Bigr]-\mathbb{E}_{Y \sim P_\mathrm{alg}}\Bigl[\ell_\mathcal{M}(Y)\Bigr]\right|\\
        &\leq s(P_\mathrm{alg}) + \sup_{f\in\mathcal{F}} \left|\mathbb{E}_{Y \sim P_\mathrm{off}}\Bigl[f(Y)\Bigr]-\mathbb{E}_{Y \sim P_\mathrm{alg}}\Bigl[f(Y)\Bigr]\right|\\
        &= s(P_\mathrm{alg}) + d_\mathcal{F}(P_\mathrm{off},P_\mathrm{alg}).
    \end{align*}
    Subtracting $d_\mathcal{F}(P_\mathrm{off},P_\mathrm{alg})$ from both sides gives the desired result.
\end{proof}

Lemma~\ref{lemma1} shows that
\begin{enumerate}[leftmargin=1.5em]
    \item the performance of any classical generative MOO method trained to match $P_\mathrm{off}$ is fundamentally limited by the offline-frontier shift $s(P_\mathrm{off})$ alone;
    \item performance improvements upon the offline distribution in MOO necessarily require out-of-distribution sampling, as quantified by the distance $d_\mathcal{F}(P_{\mathrm{alg}}, P_{\mathrm{off}})$ between the generated and offline distributions;
    \item the Maximum Mean Discrepancy~\citep{gretton2012kernel}, i.e., integral probability metric induced by the unit ball $\mathcal{F}$ of a reproducing kernel Hilbert space, is a natural measure of a generative method's improvement relative to the offline dataset.
\end{enumerate}

\section{Experiments}
We perform comprehensive experiments on a standard benchmark for offline MOO, Off-MOO-Bench~\citep{Xue2024}. The goal of our experimental evaluation is to characterize the performance of generative methods across different MOO metrics, and to demonstrate that performance differences become more pronounced as the offline-frontier shift increases. We
compare generative methods, including flow matching \citep{Yuan2025ParetoFlow} and diffusion \citep{Annadani2025PGD}, to evolutionary methods. The evolutionary methods first train a surrogate model for the objective functions~\citep{Trabucco2021, Qi2022IOM, Yuan2023ICT, Yu2021RoMA, Chen2023TriMentoring, Yu2020PCGrad, Chen2018GradNorm} and then use NSGA-II~\citep{Deb2002} to search the design space. Additional training details are provided in Appendix~\ref{sec:trainingdetails}. 
\subsection{\texorpdfstring{Novel MOO metrics on Off-MOO-Bench~\citep{Xue2024}}{Novel MOO metrics on Off-MOO-Bench}}\label{sec:novelmetrics}
\begin{table}[htbp]
\centering
\caption{MOO metrics on selected ZDT and DTLZ tasks. Results are reported for the best-performing evolutionary (Evo.) and generative (Gen.) methods. Each algorithm is run for $5$ seeds and evaluated on $256$ designs. Boldface indicates the best value.}
\vspace{\baselineskip}
\label{tab:moometrics}
\resizebox{\linewidth}{!}{%
\begin{tabular}{l l c c c c c c c c }
\toprule
 & & ZDT1 & ZDT2 & ZDT3 & DTLZ1 & DTLZ2 & DTLZ3 & DZLZ4 & DTLZ5  \\
\midrule

\multirow{2}{*}{HV ($\uparrow$)}
& Evo.
& \textbf{4.83 ± 0.00} & 5.57 ± 0.05 & 5.59 ± 0.06 & \textbf{10.64 ± 0.00} & \textbf{12.44 ± 0.00} & \textbf{9.89 ± 0.00} & \textbf{17.70 ± 0.01} & \textbf{10.76 ± 0.00} \\
& Gen.
& 4.53 ± 0.03 &  \textbf{5.67 ± 0.19} & \textbf{5.60 ± 0.05} & \textbf{10.64 ± 0.00} & 12.39 ± 0.01 & \textbf{9.89 ± 0.00} & 17.60 ± 0.03 & 10.59 ± 0.02 \\

\midrule
\multirow{2}{*}{GD+ ($\downarrow$)}
& Evo.
& \textbf{0.01 ± 0.00} & \textbf{0.05 ± 0.03} & \textbf{0.17 ± 0.02} & \textbf{0.25 ± 0.04} & \textbf{0.02 ± 0.01} & \textbf{0.38 ± 0.16} & \textbf{0.18 ± 0.04} & \textbf{0.00 ± 0.00} \\
& Gen.
&  0.29 ± 0.03 & 0.18 ± 0.02 & 0.24 ± 0.02 &  0.40 ± 0.01 & 0.16 ± 0.03 & 0.52 ± 0.00 & 0.22 ± 0.00 & 0.16 ± 0.01 \\

\midrule
\multirow{2}{*}{IGD+ ($\downarrow$)}
& Evo.
&  \textbf{0.01 ± 0.00} & \textbf{0.04 ± 0.01} & 0.12 ± 0.01 & \textbf{0.08 ± 0.01} & \textbf{0.01 ± 0.00} & \textbf{0.09 ± 0.02} & \textbf{0.06 ± 0.01} & \textbf{0.00 ± 0.00} \\
& Gen.
&  0.13 ± 0.02 & 0.10 ± 0.02 & \textbf{0.10 ± 0.01} &  0.09 ± 0.03 & 0.12 ± 0.02 & 0.18 ± 0.02 & 0.20 ± 0.04 & 0.08 ± 0.01 \\

\midrule
\multirow{2}{*}{MMD ($\uparrow$)}
& Evo.
& \textbf{1.02 ± 0.02} & \textbf{1.01 ± 0.04} & \textbf{0.95 ± 0.16} & \textbf{0.80 ± 0.06} & \textbf{0.41 ± 0.02} & \textbf{0.33 ± 0.41} & \textbf{0.37 ± 0.02} & \textbf{0.59 ± 0.01} \\
& Gen.
&  0.71 ± 0.08 & 0.86 ± 0.05 & 0.92 ± 0.06 & 0.54 ± 0.02 & 0.02 ± 0.05 & 0.00 ± 0.00 & 0.04 ± 0.08 & 0.23 ± 0.13 \\
\bottomrule
\end{tabular}
}
\end{table}

\begin{wrapfigure}[20]{r}{0.5\linewidth}
  \centering
  \vspace{-\baselineskip} 
  \includegraphics[width=\linewidth]{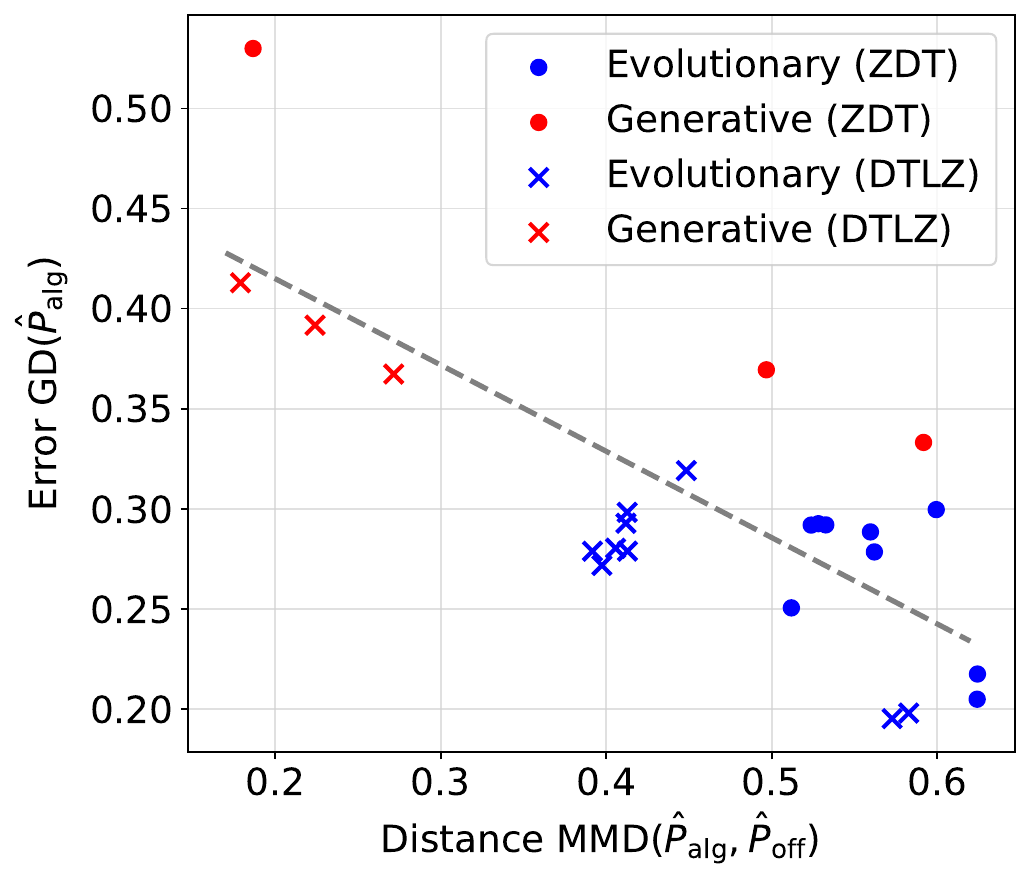}
  \caption{Correlation of error and distributional distance (MMD), averaged over subtasks.}
  \label{fig:MMD}
\end{wrapfigure}
We evaluate the performance of a solution set using established MOO metrics: hypervolume (HV)~\citep{Zitzler1999HV}, generational distance plus (GD+)~\citep{Lamont1999GD, Ishibuchi2015}, and inverted generational distance plus (IGD+)~\citep{Coello2004IGD, Ishibuchi2015}. Formal definitions are provided in Appendix~\ref{sec:moometrics}. We report results for both evolutionary and generative methods on the ZDT and DTLZ tasks (Table~\ref{tab:moometrics}). While generative methods perform competitively with evolutionary methods in terms of HV, they systematically underperform with respect to both GD+ and IGD+. We further show that generative methods behave more conservatively in objective space, remaining closer to the empirical offline distribution $\hat{P}_\mathrm{off}$, as quantified by $\mathrm{MMD}(\hat{P}_\mathrm{alg}, \hat{P}_\mathrm{off})$ (Figure~\ref{fig:MMD}). Moreover, we observe a correlation between distributional distance (MMD) and GD error, indicating that performance improvements necessarily require out-of-distribution sampling. Detailed results for HV, GD+, IGD+, and MMD are provided in Appendix~\ref{sec:additionalresults}.

\subsection{Offline-frontier~shift~as an inherent limitation}\label{sec:correlation}
We empirically demonstrate that increasing the offline-frontier shift degrades performance, with the effect significantly more pronounced for generative methods, as shown in Figure~\ref{fig1} and Table~\ref{tab:offlineshift}. To induce progressively larger shifts between the offline dataset and the Pareto front, we apply non-dominated sorting and iteratively remove $10{,}000$ samples, starting from the first non-dominated fronts. For each shift level $k$, we then randomly sample a new offline dataset of $10{,}000$ points from the remaining data. Table~\ref{tab:offlineshift} reports the resulting offline-frontier shifts $s(\hat{P}^k_\mathrm{off})$ alongside the corresponding performance metrics.
\begin{table}[htbp]
\centering
\caption{Offline-frontier shifts on modified ZDT tasks alongside MMD, error and error increase. Results are averaged over evolutionary (Evo.) and generative (Gen.) methods. Each algorithm is run for $5$ seeds and evaluated on $256$ designs. Boldface indicates the best value.}
\vspace{\baselineskip}
\label{tab:offlineshift}
\resizebox{0.7\linewidth}{!}{%
\begin{tabular}{llcccccc}
\toprule
 & $s(\hat{P}^k_\mathrm{off})$ & 0.461 & 0.486 & 0.513 & 0.545 & 0.585 & 0.632 \\
\toprule
\multirow{2}{*}{MMD ($\uparrow$)} & Evo. & \textbf{0.67 ± 0.00} & \textbf{0.70 ± 0.00} & \textbf{0.84 ± 0.00}	& \textbf{0.83 ± 0.00} & \textbf{0.85 ± 0.00} & \textbf{0.94 ± 0.00} \\ & Gen. & \textbf{0.67 ± 0.08} & 0.65 ± 0.07 & 0.74 ± 0.08 & 0.67 ± 0.04 & 0.65 ± 0.07 & 0.59 ± 0.02 \\
\midrule
\multirow{2}{*}{GD+ ($\downarrow$)} & Evo. & \textbf{0.11 ± 0.00} & \textbf{0.11 ± 0.00} & \textbf{0.12 ± 0.00}	& \textbf{0.12 ± 0.00} & \textbf{0.15 ± 0.00} & \textbf{0.17 ± 0.00} \\
& Gen. & 0.33 ± 0.01 & 0.34 ± 0.01 & 0.35 ± 0.01 & 0.37 ± 0.01 & 0.40 ± 0.02 & 0.46 ± 0.07 \\
\midrule
\multirow{2}{*}{$\Delta$GD+ ($\downarrow$)} & Evo. & \textbf{0.00} & \textbf{0.00} & \textbf{0.01} & \textbf{0.01} & \textbf{0.04} & \textbf{0.06} \\ & Gen. & \textbf{0.00} & 0.01	& 0.02 & 0.04 & 0.07	& 0.13 \\        
\bottomrule
\end{tabular}
}
\end{table}


\section{Discussion}\label{sec:discussion}
Our results highlight limitations of current generative methods for offline MOO. While these models achieve strong hypervolume performance, they remain conservative in objective space, as measured by MMD, and have less ability to deviate into out-of-distribution regions compared to evolutionary alternatives. We find that this effect becomes more pronounced as the offline-frontier shift increases. This behavior aligns with the training objectives of generative models, remaining close to the offline data distribution. Consequently, a mismatch emerges between existing generative optimization methods and the requirements of offline MOO, motivating future work on methods that enable controlled extrapolation in objective space.


\subsubsection*{Acknowledgments}
The ELLIS Unit Linz, the LIT AI Lab, the Institute for Machine Learning, are supported by the Federal State Upper Austria. We thank the projects LCM-COMET K2 Center, FWF AIRI FG 9-N (10.55776/FG9), AI4GreenHeatingGrids (FFG-899943), ELISE (H2020-ICT-2019-3 ID: 951847), Stars4Waters (HORIZON-CL6-2021-CLIMATE-01-01), FWF Bilateral Artificial Intelligence ([10.55776/COE12]). We thank NXAI GmbH, Silicon Austria Labs (SAL), FILL Gesellschaft mbH, Google, ZF Friedrichshafen AG, Robert Bosch GmbH, Merck Healthcare KGaA, GLS (Univ.~Waterloo), Borealis AG, T\"{U}V Austria, TRUMPF and the NVIDIA Corporation.

\newpage
\bibliography{iclr2026_conference}
\bibliographystyle{iclr2026_conference}

\newpage
\appendix
\section{Appendix}

\etocsettocstyle{\section*{Appendix Contents}}{}
\etocsetnexttocdepth{2}
\localtableofcontents

\subsection{Real-world application}\label{sec:motor}
Electrical machine design is challenging due to the coupling of multiple physical domains, including electromagnetics, thermal effects, structural integrity, acoustics, and manufacturing constraints, resulting in complex objective landscapes that are typically evaluated via computationally intensive finite-element simulations \citep{Silber2018, Bramerdorfer2016}. The resulting structural complexity is illustrated in Figure~\ref{fig:machine_cross_section}, which shows a typical cross-section of a permanent-magnet synchronous machine for electric-vehicle applications. Traction-motor MOO problems commonly involve roughly 15--25 design variables, spanning geometric parameters (e.g., slot, tooth, and yoke dimensions, as well as magnet size and placement) and material selections (e.g., magnet grade and electrical steel). Objectives typically balance electromagnetic losses and efficiency against thermal limits (e.g., maximum winding or magnet temperatures) and noise, vibration, and harshness (NVH). These objectives are often aggregated over representative driving profiles such as the Worldwide harmonized Light vehicles Test Procedure (WLTP) cycle, for example by minimizing the total energy consumption or the cycle-integrated losses. Structural constraints include limits on permanent plastic deformation of the rotor core at burst speed.

\begin{figure}[htbp]
  \centering
  \includegraphics[width=0.6\linewidth]{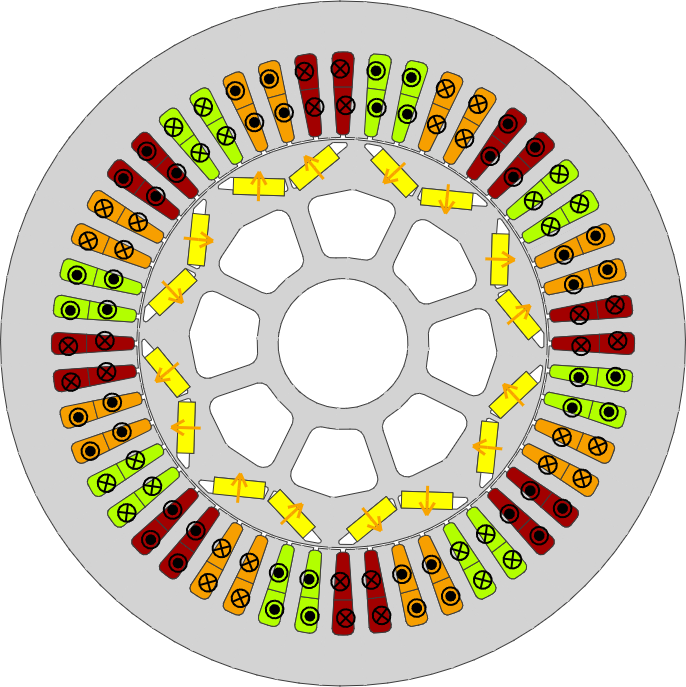}
  \caption{Cross-section of a permanent-magnet synchronous machine for electric-vehicles.}
  \label{fig:machine_cross_section}
\end{figure}

\subsection{MOO metrics}\label{sec:moometrics}
Hypervolume~\citep{Zitzler1999HV} measures the volume of the objective space dominated by the solution set $A \subset \mathbb{R}^{m}$ with respect to a reference point $r \in \mathbb{R}^{m}$:
\begin{align}
    \mathrm{HV}(A, r) := \lambda \Bigg( \bigcup_{a \in A} [a_1, r_1] \times [a_2, r_2] \times \dots \times [a_m, r_m] \Bigg),
\end{align}
where $\lambda$ denotes the Lebesgue measure in $\mathbb{R}^{m}$.

Generational distance (GD)~\citep{Lamont1999GD, Schutze2012} is computed as the average distance from a point in the solution set $A$ to its closest point in the Pareto front $Z$:
\begin{align}
    \mathrm{GD}(A,Z) := \sqrt{\frac{1}{|A|} \sum_{a \in A} d(a,Z)^2},
\end{align}
where $d(x,Y):= \min_{y\in Y}\lVert x - y \rVert_2$ denotes the distance between point $x$ and set $Y$. 

Analogously, the inverted generational distance (IGD)~\citep{Coello2004IGD, Schutze2012} is computed as the average distance from a point in the Pareto front $Z$ to its closest point in the solution set $A$:
\begin{align}
    \mathrm{IGD}(A,Z) := \mathrm{GD}(Z,A) = \sqrt{\frac{1}{|Z|} \sum_{z \in Z} d(z,A)^2}.
\end{align}

\citet{Ishibuchi2015} propose the following modification of the standard GD and IGD:
\begin{align*}
    d^{+}(x,Y) := \min_{y \in Y} \Big\lVert \max \{h(x,y), 0\} \Big\rVert_2,
\end{align*}
where 
\[
h(x,y) := 
\begin{cases} 
x - y & \text{for GD$^+$},\\[2mm]
y - x & \text{for IGD$^+$}.
\end{cases}
\]
The maximum operator is applied component-wise and ensures that only positive deviations, corresponding to improvements along each objective, are considered in the distance computation.

\subsection{Offline-frontier shift as a generalization of the GD}\label{sec:generalizationGD}

The offline-frontier shift $s$ is a natural generalization of GD: it replaces finite subsets $Z$ with (piecewise smooth) manifolds $\mathcal{M}$, discrete minimum distances $d(\cdot,Z)$ with orthogonal projections $\Pi_\mathcal{M}$, and finite averages with expectations over probability distributions. We show that the squared GD of the offline dataset converges to the offline-frontier shift evaluated at the associated empirical distribution.

The offline dataset in objective space $Y:=\{y_1,\ldots,y_n\}$ with $y_i:= {\bf f}(x_i)$ is associated with the empirical distribution
\begin{align*}
    \hat{P}_\mathrm{off}:=\frac{1}{n} \sum_{i=1}^n \delta_{y_i},
\end{align*}
and therefore,
\begin{align*}
    s(\hat{P}_\mathrm{off}) = \mathbb{E}_{Y \sim \hat{P}_\mathrm{off}}\Bigl[\left\lVert Y - \Pi_{\mathcal{M}}(Y) \right\rVert_2^2\Bigr] = \frac{1}{n} \sum_{i=1}^n\left\lVert y_i - \Pi_{\mathcal{M}}(y_i) \right\rVert_2^2.
\end{align*}

In evolutionary MOO, the Hausdorff distance \citep{Schutze2012} provides a classical measure of distance between the Pareto front and its approximation. We extend this notion to analyze how well finite discretizations approximate the true Pareto manifold. In particular, we establish that the squared GD converges to the offline-frontier shift evaluated at the associated empirical distribution as the discretizations converge to the Pareto manifold in the Hausdorff sense (Lemma~\ref{lem:GD-convergence}).

The Hausdorff distance $d_\mathcal{H}$ between two sets $A, B \subset \mathbb{R}^{m}$ is defined as
\begin{align}
d_\mathcal{H}(A,B)
:=
\max \Bigl\{
\sup\limits_{a \in A} \inf\limits_{b \in B} \lVert a - b\rVert_2,\;\;
\sup\limits_{b \in B} \inf\limits_{a \in A} \lVert b - a\rVert_2
\Bigr\}.
\end{align}

\begin{remark}\label{rem:hausdorff}
    A sequence of sets $(Z_k)_{k \in \mathbb{N}}$, $Z_k \subset \mathcal{M}$, converges to $\mathcal{M}$ in Hausdorff distance if and only if $d_\mathcal{H}(Z_k,\mathcal{M}) = \sup\limits_{y \in \mathcal{M}} \inf\limits_{z \in Z_k} \lVert y - z\rVert  \longrightarrow 0 \quad \text{as } k \to \infty$.
\end{remark}

\begin{lemma}[Convergence of GD to expected projection distance]
\label{lem:GD-convergence}
Let $\mathcal{M} \subset \mathbb{R}^m$ be compact, and let $P$ be a probability measure over $\mathbb{R}^m$ such that
\begin{align*}
    s(P) = \mathbb{E}_{X \sim P}[\|X - \Pi_\mathcal{M}(X)\|_2^2] < \infty,
\end{align*}
where $\Pi_\mathcal{M}(x) := \arg\min_{y\in \mathcal{M}} \|x-y\|_2$. Let $Y_n = \{y_1,\dots,y_n\}$ be i.i.d.~samples from $P$, and let $(Z_k)_{k \in \mathbb{N}}$ be a sequence of finite subsets $Z_k \subset \mathcal{M}$ with
\begin{align*}
    d_\mathcal{H}(Z_k, \mathcal{M}) \to 0 \quad \text{as } k \to \infty.
\end{align*}
Then, almost surely,
\begin{align*}
    \lim_{n \to \infty, k \to \infty} \mathrm{GD}^2(Y_n, Z_k) = s(P).
\end{align*}
\end{lemma}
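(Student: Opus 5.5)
The plan is to split the error with the triangle inequality:
\[
\bigl|\mathrm{GD}^2(Y_n,Z_k) - s(P)\bigr| \le \bigl|\mathrm{GD}^2(Y_n,Z_k) - s(\hat P_n)\bigr| + \bigl|s(\hat P_n) - s(P)\bigr|,
\]
where $\hat P_n := \frac1n\sum_{i=1}^n \delta_{y_i}$. By the computation above, $s(\hat P_n)=\frac1n\sum_{i=1}^n \ell_\mathcal{M}(y_i)$, where $\ell_\mathcal{M}(y)=d(y,\mathcal{M})^2$ because $\Pi_\mathcal{M}$ is a nearest-point map (a measurable selection exists, and $\ell_\mathcal{M}$ itself is continuous since $\mathcal{M}$ is compact). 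The second term is then a pure law-of-large-numbers statement. Since $s(P)=\mathbb{E}_P[\ell_\mathcal{M}]<\infty$, the strong law of large numbers gives $s(\hat P_n)\to s(P)$ almost surely, and this event does not depend on $k$.

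The first term is a deterministic discretization error, which I will bound uniformly in $n$. Let $\varepsilon_k := d_\mathcal{H}(Z_k,\mathcal{M})$. Since $Z_k\subset\mathcal{M}$, we get $d(y,Z_k)\ge d(y,\mathcal{M})$ directly. Conversely, for any $y$, the point $\Pi_\mathcal{M}(y)$ has some $z\in Z_k$ within distance $\varepsilon_k$ (Remark~\ref{rem:hausdorff}, using finiteness of $Z_k$ so the infimum is attained). Hence
\[
d(y,\mathcal{M})\le d(y,Z_k)\le d(y,\mathcal{M})+\varepsilon_k .
\]
Squaring gives
\[
0\le d(y,Z_k)^2 - d(y,\mathcal{M})^2 \le 2\varepsilon_k\, d(y,\mathcal{M}) + \varepsilon_k^2 .
\]
Averaging over the sample,
\[
0\le \mathrm{GD}^2(Y_n,Z_k) - s(\hat P_n) \le 2\varepsilon_k \frac1n\sum_{i=1}^n d(y_i,\mathcal{M}) + \varepsilon_k^2 .
\]

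The main obstacle is that the joint limit $n,k\to\infty$ must hold in any order, so the bound has to be uniform in $n$. To get this, apply Cauchy--Schwarz:
\[
\frac1n\sum_{i=1}^n d(y_i,\mathcal{M}) \le \sqrt{s(\hat P_n)} .
\]
On the almost-sure event where $s(\hat P_n)\to s(P)<\infty$, the sequence $s(\hat P_n)$ is bounded by some finite random constant $C$. Therefore the first term is at most $2\varepsilon_k\sqrt{C}+\varepsilon_k^2$, which tends to $0$ as $k\to\infty$ uniformly in $n$.

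Combining the two estimates on this single probability-one event, for every $\eta>0$ there exist $N$ and $K$ such that $|\mathrm{GD}^2(Y_n,Z_k)-s(P)|<\eta$ for all $n\ge N$ and $k\ge K$. This is the claimed double limit, holding almost surely.
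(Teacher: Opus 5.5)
Your proof is correct and follows the paper's structure. It uses the same triangle-inequality split into a discretization term, handled by the sandwich $d(y,\mathcal{M})\le d(y,Z_k)\le d(y,\mathcal{M})+\varepsilon_k$ from Hausdorff convergence, and a sampling term, handled by the strong law of large numbers. Your squared bound $2\varepsilon_k\sqrt{s(\hat P_n)}+\varepsilon_k^2$, together with the almost-sure boundedness of $s(\hat P_n)$, also makes the discretization error uniform in $n$. That uniformity is exactly what the joint limit requires, and the paper's proof leaves it implicit, since it only shows $\lim_{k}\mathrm{GD}^2(Y_n,Z_k)=s(\hat P_n)$ for each fixed $n$.
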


\begin{proof}
We prove the convergence by splitting the error into two parts:  

\textbf{a) Discretization error:}  
Fix a sample $y_i \in Y_n$. For each $k$, define
\begin{align*}
    z_k^i := \arg\min_{z \in Z_k} \|y_i - z\|_2.
\end{align*}
By compactness of $\mathcal{M}$, the projection
\begin{align*}
    \Pi_\mathcal{M}(y_i) := \arg\min_{y \in \mathcal{M}} \|y_i - y\|_2
\end{align*}
exists. Fix $\varepsilon>0$. By Hausdorff convergence, there exists $k'\in\mathbb{N}$ such that for all $k\ge k'$,
\begin{align*}
    \inf_{z\in Z_k} \|\Pi_{\mathcal{M}}(y_i) - z\|_2 \leq \varepsilon.
\end{align*}
Hence, for such $k$, there exists $z^\prime_k\in Z_k$ with
\begin{align*}
    \|\Pi_{\mathcal{M}}(y_i) - z^\prime_k\|_2 \leq \varepsilon.
\end{align*}
By minimality of $z_k^i$ and triangle inequality, 
\begin{align*}
    \|y_i - z_k^i\|_2 \leq \|y_i - z^\prime_k\|_2 \leq \|y_i - \Pi_{\mathcal{M}}(y_i)\|_2 + \|\Pi_{\mathcal{M}}(y_i) - z^\prime_k\|_2 \leq  \|y_i - \Pi_{\mathcal{M}}(y_i)\|_2 + \varepsilon.
\end{align*}
On the other hand, since $z_k^i \in Z_k \subset \mathcal{M}$,
\begin{align*}
    \|y_i - \Pi_{\mathcal{M}}(y_i)\|_2 \leq \|y_i - z_k^i\|_2.
\end{align*}
Combining the inequalities gives
\begin{align*}
    \|y_i - \Pi_{\mathcal{M}}(y_i)\|_2 \leq \|y_i - z_k^i\|_2 \leq \|y_i - \Pi_{\mathcal{M}}(y_i)\|_2 + \varepsilon.
\end{align*}

Since $\varepsilon>0$ was arbitrary, taking the limit $\epsilon\to0$ gives
\begin{align*}
    \lim_{k\to\infty} \|y_i - z_i^k\|_2 = \|y_i - \Pi_{\mathcal{M}}(y_i)\|_2.
\end{align*}
Applying this point-wise convergence to all $y_i \in Y_n$ immediately gives
\begin{align*}
    \lim_{k \to \infty} \mathrm{GD}^2(Y_n, Z_k) 
    = \frac{1}{n} \sum_{i=1}^n \| y_i - \Pi_\mathcal{M}(y_i) \|_2^2  = s(\hat{P}_n),
\end{align*}
i.e., $\mathrm{GD}^2(Y_n,\cdot)$ converges to the empirical squared projection distance.

\textbf{b) Sampling error:}  
Define 
\begin{align*}
    X_i := \|y_i - \Pi_\mathcal{M}(y_i)\|_2^2.
\end{align*}
Then $X_1, \dots, X_n$ are i.i.d.~with $\mathbb{E}[X_i] = s(P) < \infty$. By the strong law of large numbers (SLLN),
\begin{align*}
    s(\hat{P}_n) = \frac{1}{n} \sum_{i=1}^n X_i \xrightarrow{\text{a.s.}} s(P) \quad \text{as } n \to \infty.
\end{align*}
\textbf{c) Combination:}  
Finally, by the triangle inequality,
\begin{align*}
    |\mathrm{GD}^2(Y_n, Z_k) - s(P)| \le |\mathrm{GD}^2(Y_n, Z_k) - s(\hat{P}_n)| + |s(\hat{P}_n) - s(P)|.
\end{align*}
The first term vanishes as $k \to \infty$ (discretization error). The second term vanishes as $n \to \infty$ (sampling error). Hence, taking the joint limit $n \to \infty$, $k \to \infty$, we get
\begin{align*}
    \mathrm{GD}^2(Y_n, Z_k) \xrightarrow{\text{a.s.}} s(P) \quad \text{as }  n \to \infty, k \to \infty.
\end{align*}
\end{proof}

\subsection{Integral probability metrics}\label{sec:MMD+}
Given two probability measures $P$ and $Q$ on a measurable space $\mathcal{Y}$ and a class of measurable functions $\mathcal{F}$, the induced integral probability metric (IPM) $d_\mathcal{F}$ is defined as
\begin{align}
    d_\mathcal{F}(P,Q) := \sup_{f\in\mathcal{F}} \left|\mathbb{E}_{Y \sim P}\Bigl[f(Y)\Bigr]-\mathbb{E}_{Y \sim Q}\Bigl[f(Y)\Bigr]\right|.
\end{align}
The IPM $d_\mathcal{F}$ induced by the unit ball $\mathcal{F}$ of a reproducing kernel Hilbert space is the Maximum Mean Discrepency (MMD) \citep{gretton2012kernel}. With associated kernel $k(\cdot,\cdot)$, the squared MMD is 
\begin{align}
    \mathrm{MMD}^2(P,Q) = \mathbb{E}_{x,x^\prime \sim P}\big[k(x,x^\prime)\big] + \mathbb{E}_{y,y^\prime \sim Q}\big[k(y,y^\prime)\big] - 2 \mathbb{E}_{x \sim P, y \sim Q}\big[k(x,y)\big].
\end{align}
Similar to the modification in the GD+ and IGD+, we only consider positive deviations in the distance computation $\mathrm{MMD}(\hat{P}_\mathrm{alg}, \hat{P}_\mathrm{off})$.

\subsection{Training details}\label{sec:trainingdetails}
Following the approach of~\citep{Xue2024}, we consider two DNN-based architecture models: multi-head and multiple-models. We train 1) multi-head models, including multi-head vanilla, multi-head + GradNorm~\citep{Chen2018GradNorm}, and multi-head + PcGrad~\citep{Yu2020PCGrad}, and 2) multiple-models, including multiple-models vanilla, multiple-models + COM~\citep{Trabucco2021}, multiple-models + IOM~\citep{Qi2022IOM}, multiple-models + RoMA~\citep{Yu2021RoMA}, multiple-models + ICT~\citep{Yuan2023ICT}, and multiple-models + TriMentoring~\citep{Chen2023TriMentoring}. 

The multi-head model consists of a shared MLP feature extractor with two $2048$-dimensional hidden layers and ReLU activations, followed by one task-specific head per objective (one $2048$-dimensional hidden layer and ReLU activation). Multiple-models trains an independent MLP for each objective with the same architecture as the multi-head model. 
All models are trained on the offline dataset using mean squared error (MSE) loss and the Adam optimizer with initial learning rate $1\times 10^{-3}$ and learning rate decay $0.98$ per epoch. Training proceeds for $200$ epochs with batch size 128. 

The flow matching model \citep{Yuan2025ParetoFlow} uses a $4$-layer MLP with SeLU activations and hidden layer size of $512$. Training runs for $1000$ epochs with early stopping and Adam optimizer. Following the approach of \citet{Annadani2025PGD}, we train a preference model and an unconditional diffusion model. The unconditional diffusion model is an MLP with two $512$-dimensional hidden layers and sinusoidal time embedding, followed by ReLU activation and layer normalization. The model is trained using AdamW with learning rate $5\times 10^{-4}$ for up to $200$ epochs.
The preference model is an MLP with three hidden layers: the first two hidden layers match the input dimensionality, while the last hidden layer has $512$ units. ReLU nonlinearity, layer normalization, and sinusoidal time embeddings are applied similarly to the denoising model. The model is trained using Adam with learning rate $1 \times 10^{-5}$ for up to $500$ epochs.

\newpage
\subsection{Additional experimental results}\label{sec:additionalresults}

\begin{table}[htbp]
\centering
\caption{HV results of ZDT tasks ($\uparrow$). Each algorithm is run for $5$ seeds and evaluated on $256$ designs. Bold numbers indicate the best performance.}
\vspace{\baselineskip}
\label{HV:ZDT}
\resizebox{\linewidth}{!}{%
\begin{tabular}{lllllll}
\toprule
 & ZDT1 & ZDT2 & ZDT3 & ZDT4 & ZDT6 & Avg. Rank \\
\midrule
D(best) & 4.17 ± 0.00 & 4.68 ± 0.00 & 5.15 ± 0.00 & 5.46 ± 0.00 & 4.61 ± 0.00 & 10.0 ± 5.2 \\ \midrule
MultiHead-Vallina & 4.80 ± 0.05 & 5.57 ± 0.09 & 5.59 ± 0.06 & 4.74 ± 0.31 & 4.78 ± 0.00 & \textbf{4.2 ± 1.92} \\
MultiHead-PcGrad & \textbf{4.83 ± 0.03} & 5.56 ± 0.09 & 5.51 ± 0.03 & 4.25 ± 0.46 & 4.77 ± 0.02 & 5.6 ± 1.14 \\
MultiHead-GradNorm & 4.69 ± 0.16 & 5.38 ± 0.14 & 5.49 ± 0.21 & 3.95 ± 0.40 & 4.30 ± 0.83 & 10.4 ± 1.95 \\
MultipleModels-Vallina & 4.80 ± 0.05 & 5.57 ± 0.09 & 5.59 ± 0.06 & 4.74 ± 0.31 & 4.78 ± 0.00 & \textbf{4.2 ± 1.92} \\
MultipleModels-COM & \textbf{4.83 ± 0.00} & 5.54 ± 0.07 & 5.43 ± 0.03 & 4.09 ± 0.20 & 4.73 ± 0.05 & 6.4 ± 3.13 \\
MultipleModels-IOM & \textbf{4.83 ± 0.01} & 5.57 ± 0.05 & 5.51 ± 0.05 & 3.91 ± 0.27 & 4.75 ± 0.01 & 5.8 ± 3.77 \\
MultipleModels-RoMA & \textbf{4.83 ± 0.01} & 5.57 ± 0.05 & 5.51 ± 0.05 & 3.91 ± 0.27 & 4.75 ± 0.01 & 5.8 ± 3.77 \\
MultipleModels-ICT & \textbf{4.83 ± 0.00} & 5.54 ± 0.07 & 5.43 ± 0.03 & 4.09 ± 0.20 & 4.73 ± 0.05 & 6.4 ± 3.13 \\
MultipleModels-TriMentoring & \textbf{4.83 ± 0.00} & 5.54 ± 0.07 & 5.43 ± 0.03 & 4.09 ± 0.20 & 4.73 ± 0.05 & 6.4 ± 3.13 \\ \midrule
ParetoFlow-Vallina & 4.19 ± 0.06 & \textbf{5.67 ± 0.19} & 5.18 ± 0.07 & 4.91 ± 0.14 & 4.53 ± 0.06 & 8.0 ± 5.15 \\
Diffusion-Guidance-Crowding & 4.53 ± 0.03 & 5.39 ± 0.04 & \textbf{5.60 ± 0.05} & 5.01 ± 0.08 & \textbf{4.82 ± 0.01} & 5.0 ± 4.64 \\
Diffusion-Guidance-SubCrowding & 4.53 ± 0.05 & 5.25 ± 0.04 & \textbf{5.60 ± 0.05} & \textbf{5.03 ± 0.06} & 4.49 ± 0.19 & 7.6 ± 5.59 \\
\bottomrule
\end{tabular}
}
\end{table}

\begin{table}[htbp]
\centering
\caption{HV results of DTLZ tasks ($\uparrow$). Each algorithm is run for $5$ seeds and evaluated on $256$ designs. Bold numbers indicate the best performance.}
\vspace{\baselineskip}
\label{HV:DTLZ}
\resizebox{\linewidth}{!}{%
\begin{tabular}{lllllllll}
\toprule
 & DTLZ1 & DTLZ2 & DTLZ3 & DTLZ4 & DTLZ5 & DTLZ6 & DTLZ7 & Avg. Rank \\
\midrule
D(best) & 10.63 ± 0.00 & 12.43 ± 0.00 & 9.90 ± 0.00 & 17.50 ± 0.00 & 10.69 ± 0.00 & 10.69 ± 0.00 & 9.04 ± 0.00 & 9.0 ± 3.79 \\ \midrule
MultiHead-Vallina & 10.56 ± 0.16 & \textbf{12.44 ± 0.00} & 9.75 ± 0.27 & \textbf{17.70 ± 0.01} & \textbf{10.76 ± 0.00} & \textbf{10.95 ± 0.01} & 10.60 ± 0.02 & 4.86 ± 5.05 \\
MultiHead-PcGrad & \textbf{10.64 ± 0.00} & \textbf{12.44 ± 0.00} & \textbf{9.89 ± 0.00} & 17.66 ± 0.02 & \textbf{10.76 ± 0.00} & 10.93 ± 0.01 & 10.62 ± 0.03 & 3.14 ± 2.79 \\
MultiHead-GradNorm & 10.57 ± 0.15 & 11.90 ± 1.08 & 9.66 ± 0.28 & 17.64 ± 0.04 & 10.01 ± 1.17 & 10.79 ± 0.09 & 9.79 ± 0.71 & 11.14 ± 1.86 \\
MultipleModels-Vallina & 10.56 ± 0.16 & \textbf{12.44 ± 0.00} & 9.75 ± 0.27 & \textbf{17.70 ± 0.01} & \textbf{10.76 ± 0.00} & \textbf{10.95 ± 0.01} & 10.60 ± 0.02 & 4.86 ± 5.05 \\
MultipleModels-COM & \textbf{10.64 ± 0.00} & \textbf{12.44 ± 0.00} & \textbf{9.89 ± 0.00} & 17.69 ± 0.02 & \textbf{10.76 ± 0.00} & 10.90 ± 0.01 & \textbf{10.74 ± 0.01} & \textbf{1.86 ± 1.21} \\
MultipleModels-IOM & \textbf{10.64 ± 0.00} & \textbf{12.44 ± 0.00} & \textbf{9.89 ± 0.00} & 17.68 ± 0.01 & \textbf{10.76 ± 0.00} & 10.89 ± 0.01 & \textbf{10.74 ± 0.03} & 3.14 ± 2.54 \\
MultipleModels-RoMA & \textbf{10.64 ± 0.00} & \textbf{12.44 ± 0.00} & \textbf{9.89 ± 0.00} & 17.68 ± 0.01 & \textbf{10.76 ± 0.00} & 10.89 ± 0.01 & \textbf{10.74 ± 0.03} & 3.14 ± 2.54 \\
MultipleModels-ICT & \textbf{10.64 ± 0.00} & \textbf{12.44 ± 0.00} & \textbf{9.89 ± 0.00} & 17.69 ± 0.02 & \textbf{10.76 ± 0.00} & 10.90 ± 0.01 & \textbf{10.74 ± 0.01} & \textbf{1.86 ± 1.21} \\
MultipleModels-TriMentoring & \textbf{10.64 ± 0.00} & \textbf{12.44 ± 0.00} & \textbf{9.89 ± 0.00} & 17.69 ± 0.02 & \textbf{10.76 ± 0.00} & 10.90 ± 0.01 & \textbf{10.74 ± 0.01} & \textbf{1.86 ± 1.21} \\ \midrule
ParetoFlow-Vallina & 10.60 ± 0.02 & 12.27 ± 0.04 & 9.82 ± 0.04 & 15.93 ± 0.01 & 10.59 ± 0.02 & 10.71 ± 0.05 & 8.87 ± 0.09 & 11.14 ± 1.21 \\
Diffusion-Guidance-Crowding & \textbf{10.64 ± 0.00} & 12.39 ± 0.01 & \textbf{9.89 ± 0.00} & 15.02 ± 0.98 & 10.55 ± 0.03 & 10.53 ± 0.09 & 9.78 ± 0.08 & 8.71 ± 5.09 \\
Diffusion-Guidance-SubCrowding & \textbf{10.64 ± 0.00} & 12.38 ± 0.01 & 9.88 ± 0.00 & 17.60 ± 0.03 & 10.57 ± 0.02 & 10.80 ± 0.03 & 9.59 ± 0.12 & 8.86 ± 3.58 \\
\bottomrule
\end{tabular}
}
\end{table}

\begin{table}[htbp]
\centering
\caption{GD+ results of ZDT tasks ($\downarrow$). Each algorithm is run for $5$ seeds and evaluated on $256$ designs. Boldface indicates the best-performing methods.}
\vspace{\baselineskip}
\label{GD+:ZDT}
\resizebox{\linewidth}{!}{%
\begin{tabular}{llllllll}
\toprule
 & & ZDT1 & ZDT2 & ZDT3 & ZDT4 & ZDT6 & Avg. \\
\midrule
& D(best) & 0.38 ± 0.00 & 0.43 ± 0.00 & 0.38 ± 0.00 & 0.05 ± 0.00 & 0.06 ± 0.00 & 0.26 ± 0.17 \\ \midrule
\multirow{9}{*}{Evo.} & MultiHead-Vallina & 0.04 ± 0.02 & \textbf{0.05 ± 0.03} & \textbf{0.17 ± 0.02} & 0.68 ± 0.08 & 0.10 ± 0.00 & \textbf{0.21 ± 0.24} \\
& MultiHead-PcGrad & \textbf{0.01 ± 0.00} & 0.08 ± 0.05 & 0.26 ± 0.01 & 0.79 ± 0.24 & 0.14 ± 0.15 & 0.26 ± 0.28 \\
& MultiHead-GradNorm & 0.08 ± 0.05 & 0.15 ± 0.04 & \textbf{0.17 ± 0.02} & 0.96 ± 0.10 & 0.11 ± 0.05 & 0.29 ± 0.33 \\
& MultipleModels-Vallina & 0.04 ± 0.02 & \textbf{0.05 ± 0.03} & \textbf{0.17 ± 0.02} & 0.68 ± 0.08 & 0.10 ± 0.00 & \textbf{0.21 ± 0.24} \\
& MultipleModels-COM & 0.03 ± 0.01 & 0.10 ± 0.03 & 0.29 ± 0.01 & 0.90 ± 0.11 & 0.17 ± 0.17 & 0.30 ± 0.31 \\
& MultipleModels-IOM & 0.02 ± 0.00 & 0.08 ± 0.01 & 0.25 ± 0.02 & 0.97 ± 0.10 & \textbf{0.08 ± 0.07} & 0.28 ± 0.35 \\
& MultipleModels-RoMA & 0.02 ± 0.00 & 0.08 ± 0.01 & 0.25 ± 0.02 & 0.97 ± 0.10 & \textbf{0.08 ± 0.07} & 0.28 ± 0.35 \\
& MultipleModels-ICT & 0.03 ± 0.01 & 0.10 ± 0.03 & 0.29 ± 0.01 & 0.90 ± 0.11 & 0.17 ± 0.17 & 0.30 ± 0.31 \\
& MultipleModels-TriMentoring & 0.03 ± 0.01 & 0.10 ± 0.03 & 0.29 ± 0.01 & 0.90 ± 0.11 & 0.17 ± 0.17 & 0.30 ± 0.31 \\ \midrule
\multirow{3}{*}{Gen.} & ParetoFlow-Vallina & 0.34 ± 0.02 & 0.44 ± 0.01 & 0.36 ± 0.02 & \textbf{0.39 ± 0.06} & 0.17 ± 0.00 & 0.34 ± 0.09 \\
& Diffusion-Guidance-Crowding & 0.29 ± 0.03 & 0.18 ± 0.02 & 0.24 ± 0.02 & 0.56 ± 0.01 & 0.51 ± 0.04 & 0.36 ± 0.15 \\
& Diffusion-Guidance-SubCrowding & 0.39 ± 0.03 & 0.48 ± 0.01 & 0.37 ± 0.02 & 0.56 ± 0.00 & 0.84 ± 0.02 & 0.53 ± 0.17 \\
\bottomrule
\end{tabular}
}
\end{table}

\begin{table}[htbp]
\centering
\caption{GD+ results of DTLZ tasks ($\downarrow$). Each algorithm is run for $5$ seeds and evaluated on $256$ designs. Boldface indicates the best-performing methods.}
\vspace{\baselineskip}
\label{GD+:DTLZ}
\resizebox{\linewidth}{!}{%
\begin{tabular}{llllllllll}
\toprule
 & & DTLZ1 & DTLZ2 & DTLZ3 & DTLZ4 & DTLZ5 & DTLZ6 & DTLZ7 & Avg. \\
\midrule
& D(best) & 0.60 ± 0.00 & 0.10 ± 0.00 & 0.34 ± 0.00 & 0.18 ± 0.00 & 0.13 ± 0.00 & 0.76 ± 0.00 & 0.48 ± 0.00 & 0.37 ± 0.23 \\ \midrule
\multirow{9}{*}{Evo.} & MultiHead-Vallina & \textbf{0.25 ± 0.04} & \textbf{0.02 ± 0.01} & 0.42 ± 0.03 & \textbf{0.18 ± 0.04} & \textbf{0.00 ± 0.00} & \textbf{0.46 ± 0.01} & 0.08 ± 0.07 & \textbf{0.20 ± 0.17} \\
& MultiHead-PcGrad & 0.44 ± 0.03 & 0.07 ± 0.03 & 0.44 ± 0.03 & 0.30 ± 0.02 & 0.05 ± 0.00 & 0.51 ± 0.01 & \textbf{0.06 ± 0.02} & 0.27 ± 0.19 \\
& MultiHead-GradNorm & 0.34 ± 0.11 & 0.25 ± 0.28 & \textbf{0.38 ± 0.16} & 0.24 ± 0.04 & 0.22 ± 0.20 & 0.59 ± 0.05 & 0.22 ± 0.09 & 0.32 ± 0.12 \\
& MultipleModels-Vallina & \textbf{0.25 ± 0.04} & \textbf{0.02 ± 0.01} & 0.42 ± 0.03 & \textbf{0.18 ± 0.04} & \textbf{0.00 ± 0.00} & \textbf{0.46 ± 0.01} & 0.08 ± 0.07 & \textbf{0.20 ± 0.17} \\
& MultipleModels-COM & 0.44 ± 0.04 & 0.06 ± 0.01 & 0.50 ± 0.04 & 0.27 ± 0.04 & 0.06 ± 0.02 & 0.55 ± 0.02 & 0.15 ± 0.01 & 0.29 ± 0.19 \\
& MultipleModels-IOM & 0.44 ± 0.02 & 0.05 ± 0.01 & 0.47 ± 0.01 & 0.25 ± 0.03 & 0.06 ± 0.01 & 0.56 ± 0.02 & 0.14 ± 0.03 & 0.28 ± 0.19 \\
& MultipleModels-RoMA & 0.44 ± 0.02 & 0.05 ± 0.01 & 0.47 ± 0.01 & 0.25 ± 0.03 & 0.06 ± 0.01 & 0.56 ± 0.02 & 0.14 ± 0.03 & 0.28 ± 0.19 \\
& MultipleModels-ICT & 0.44 ± 0.04 & 0.06 ± 0.01 & 0.50 ± 0.04 & 0.27 ± 0.04 & 0.06 ± 0.02 & 0.55 ± 0.02 & 0.15 ± 0.01 & 0.29 ± 0.19 \\
& MultipleModels-TriMentoring & 0.44 ± 0.04 & 0.06 ± 0.01 & 0.50 ± 0.04 & 0.27 ± 0.04 & 0.06 ± 0.02 & 0.55 ± 0.02 & 0.15 ± 0.01 & 0.29 ± 0.19 \\ \midrule
\multirow{3}{*}{Gen.} & ParetoFlow-Vallina & 0.45 ± 0.02 & 0.16 ± 0.03 & 0.53 ± 0.04 & 0.22 ± 0.00 & 0.16 ± 0.01 & 0.75 ± 0.05 & 0.50 ± 0.03 & 0.40 ± 0.21 \\
& Diffusion-Guidance-Crowding & 0.40 ± 0.01 & 0.22 ± 0.02 & 0.52 ± 0.00 & 0.22 ± 0.02 & 0.24 ± 0.04 & 0.60 ± 0.02 & 0.35 ± 0.02 & 0.36 ± 0.14 \\
& Diffusion-Guidance-SubCrowding & 0.42 ± 0.01 & 0.27 ± 0.01 & 0.52 ± 0.01 & 0.25 ± 0.01 & 0.29 ± 0.00 & 0.73 ± 0.01 & 0.48 ± 0.01 & 0.42 ± 0.16 \\
\bottomrule
\end{tabular}
}
\end{table}

\begin{table}[htbp]
\centering
\caption{IGD+ results of ZDT tasks ($\downarrow$). Each algorithm is run for $5$ seeds and evaluated on $256$ designs. Boldface indicates the best-performing methods.}
\vspace{\baselineskip}
\label{IGD+:ZDT}
\resizebox{\linewidth}{!}{%
\begin{tabular}{llllllll}
\toprule
&  & ZDT1 & ZDT2 & ZDT3 & ZDT4 & ZDT6 & Avg. \\
\midrule
& D(best) & 0.36 ± 0.00 & 0.44 ± 0.00 & 0.36 ± 0.00 & 0.05 ± 0.00 & 0.07 ± 0.00 & 0.26 ± 0.16 \\ \midrule
\multirow{9}{*}{Evo.} & MultiHead-Vallina & 0.02 ± 0.01 & 0.04 ± 0.02 & 0.12 ± 0.01 & 0.38 ± 0.14 & 0.02 ± 0.00 & \textbf{0.12 ± 0.14} \\
& MultiHead-PcGrad & 0.01 ± 0.01 & 0.04 ± 0.02 & 0.20 ± 0.01 & 0.57 ± 0.22 & 0.02 ± 0.00 & 0.17 ± 0.21 \\
& MultiHead-GradNorm & 0.08 ± 0.06 & 0.12 ± 0.05 & 0.15 ± 0.05 & 0.74 ± 0.18 & 0.13 ± 0.18 & 0.24 ± 0.25 \\
& MultipleModels-Vallina & 0.02 ± 0.01 & 0.04 ± 0.02 & 0.12 ± 0.01 & 0.38 ± 0.14 & 0.02 ± 0.00 & \textbf{0.12 ± 0.14} \\
& MultipleModels-COM & 0.02 ± 0.00 & 0.05 ± 0.02 & 0.23 ± 0.01 & 0.67 ± 0.09 & 0.04 ± 0.02 & 0.20 ± 0.25 \\
& MultipleModels-IOM & \textbf{0.01 ± 0.00} & \textbf{0.04 ± 0.01} & 0.20 ± 0.02 & 0.76 ± 0.12 & 0.03 ± 0.01 & 0.21 ± 0.28 \\
& MultipleModels-RoMA & \textbf{0.01 ± 0.00} & \textbf{0.04 ± 0.01} & 0.20 ± 0.02 & 0.76 ± 0.12 & 0.03 ± 0.01 & 0.21 ± 0.28 \\
& MultipleModels-ICT & 0.02 ± 0.00 & 0.05 ± 0.02 & 0.23 ± 0.01 & 0.67 ± 0.09 & 0.04 ± 0.02 & 0.20 ± 0.25 \\
& MultipleModels-TriMentoring & 0.02 ± 0.00 & 0.05 ± 0.02 & 0.23 ± 0.01 & 0.67 ± 0.09 & 0.04 ± 0.02 & 0.20 ± 0.25 \\ \midrule
\multirow{3}{*}{Gen.} & ParetoFlow-Vallina & 0.32 ± 0.01 & 0.39 ± 0.02 & 0.33 ± 0.03 & 0.30 ± 0.06 & 0.10 ± 0.03 & 0.29 ± 0.10 \\
& Diffusion-Guidance-Crowding & 0.13 ± 0.02 & 0.10 ± 0.02 & \textbf{0.10 ± 0.01} & 0.26 ± 0.04 & \textbf{0.01 ± 0.01} & \textbf{0.12 ± 0.08} \\
& Diffusion-Guidance-SubCrowding & 0.14 ± 0.01 & 0.17 ± 0.02 & 0.11 ± 0.01 & \textbf{0.25 ± 0.04} & 0.22 ± 0.10 & 0.18 ± 0.05 \\
\bottomrule
\end{tabular}
}
\end{table}

\begin{table}[htbp]
\centering
\caption{IGD+ results of DTLZ tasks ($\downarrow$). Each algorithm is run for $5$ seeds and evaluated on $256$ designs. Boldface indicates the best-performing methods.}
\vspace{\baselineskip}
\label{IGD+:DTLZ}
\resizebox{\linewidth}{!}{%
\begin{tabular}{llllllllll}
\toprule
& & DTLZ1 & DTLZ2 & DTLZ3 & DTLZ4 & DTLZ5 & DTLZ6 & DTLZ7 & Avg. \\
\midrule
& D(best) & 0.26 ± 0.00 & 0.05 ± 0.00 & 0.08 ± 0.00 & 0.15 ± 0.00 & 0.05 ± 0.00 & 0.68 ± 0.00 & 0.37 ± 0.00 & 0.23 ± 0.21 \\ \midrule
\multirow{9}{*}{Evo.} & MultiHead-Vallina & 0.10 ± 0.01 & \textbf{0.01 ± 0.00} & 0.16 ± 0.02 & \textbf{0.06 ± 0.01} & \textbf{0.00 ± 0.00} & 0.40 ± 0.02 & \textbf{0.02 ± 0.00} & \textbf{0.11 ± 0.13} \\
& MultiHead-PcGrad & 0.10 ± 0.03 & \textbf{0.01 ± 0.00} & \textbf{0.09 ± 0.02} & 0.10 ± 0.01 & \textbf{0.00 ± 0.00} & 0.45 ± 0.01 & 0.02 ± 0.01 & 0.11 ± 0.14 \\
& MultiHead-GradNorm & \textbf{0.08 ± 0.01} & 0.18 ± 0.30 & 0.14 ± 0.02 & 0.11 ± 0.04 & 0.13 ± 0.22 & 0.51 ± 0.03 & 0.25 ± 0.10 & 0.20 ± 0.14 \\
& MultipleModels-Vallina & 0.10 ± 0.01 & \textbf{0.01 ± 0.00} & 0.16 ± 0.02 & \textbf{0.06 ± 0.01} & \textbf{0.00 ± 0.00} & 0.40 ± 0.02 & \textbf{0.02 ± 0.00} & \textbf{0.11 ± 0.13} \\
& MultipleModels-COM & 0.11 ± 0.01 & \textbf{0.01 ± 0.00} & 0.16 ± 0.04 & 0.08 ± 0.03 & \textbf{0.00 ± 0.00} & 0.47 ± 0.02 & \textbf{0.02 ± 0.00} & 0.12 ± 0.15 \\
& MultipleModels-IOM & 0.12 ± 0.03 & 0.02 ± 0.00 & 0.17 ± 0.03 & 0.09 ± 0.02 & \textbf{0.00 ± 0.00} & 0.48 ± 0.02 & 0.02 ± 0.01 & 0.13 ± 0.15 \\
& MultipleModels-RoMA & 0.12 ± 0.03 & 0.02 ± 0.00 & 0.17 ± 0.03 & 0.09 ± 0.02 & \textbf{0.00 ± 0.00} & 0.48 ± 0.02 & 0.02 ± 0.01 & 0.13 ± 0.15 \\
& MultipleModels-ICT & 0.11 ± 0.01 & \textbf{0.01 ± 0.00} & 0.16 ± 0.04 & 0.08 ± 0.03 & \textbf{0.00 ± 0.00} & 0.47 ± 0.02 & \textbf{0.02 ± 0.00} & 0.12 ± 0.15 \\
& MultipleModels-TriMentoring & 0.11 ± 0.01 & \textbf{0.01 ± 0.00} & 0.16 ± 0.04 & 0.08 ± 0.03 & \textbf{0.00 ± 0.00} & 0.47 ± 0.02 & \textbf{0.02 ± 0.00} & 0.12 ± 0.15 \\ \midrule
\multirow{3}{*}{Gen.} & ParetoFlow-Vallina & 0.11 ± 0.05 & 0.12 ± 0.02 & 0.19 ± 0.05 & 0.29 ± 0.00 & 0.11 ± 0.01 & 0.56 ± 0.03 & 0.37 ± 0.01 & 0.25 ± 0.16 \\
& Diffusion-Guidance-Crowding & 0.09 ± 0.03 & 0.17 ± 0.03 & 0.18 ± 0.02 & 0.30 ± 0.05 & 0.08 ± 0.01 & \textbf{0.24 ± 0.05} & 0.19 ± 0.02 & 0.18 ± 0.07 \\
& Diffusion-Guidance-SubCrowding & 0.10 ± 0.03 & 0.12 ± 0.02 & 0.18 ± 0.05 & 0.20 ± 0.04 & 0.11 ± 0.02 & 0.42 ± 0.07 & 0.23 ± 0.03 & 0.19 ± 0.10 \\
\bottomrule
\end{tabular}
}
\end{table}

\begin{table}[htbp]
\centering
\caption{$\mathrm{MMD}(\hat{P}_{\mathrm{alg}},\hat{P}_{\mathrm{off}})$ results of ZDT tasks ($\uparrow$). Each algorithm is run for $5$ seeds and evaluated on $256$ designs. Boldface marks the methods with the largest distance.}
\vspace{\baselineskip}
\label{MMD+:ZDT}
\resizebox{\linewidth}{!}{%
\begin{tabular}{lllllll}
\toprule
 & ZDT1 & ZDT2 & ZDT3 & ZDT4 & ZDT6 & Avg. \\
\midrule
MultiHead-Vallina & 1.02 ± 0.10 & 0.97 ± 0.10 & 0.93 ± 0.11 & 0.16 ± 0.09 & 0.00 ± 0.00 & \textbf{0.62 ± 0.44} \\
MultiHead-PcGrad & 0.99 ± 0.06 & 0.95 ± 0.13 & 0.53 ± 0.05 & 0.08 ± 0.09 & 0.00 ± 0.00 & 0.51 ± 0.42 \\
MultiHead-GradNorm & 0.92 ± 0.11 & 0.87 ± 0.14 & \textbf{0.95 ± 0.16} & 0.08 ± 0.10 & 0.14 ± 0.26 & 0.59 ± 0.39 \\
MultipleModels-Vallina & 1.02 ± 0.09 & 0.97 ± 0.10 & 0.93 ± 0.11 & 0.16 ± 0.09 & 0.00 ± 0.00 & \textbf{0.62 ± 0.44} \\
MultipleModels-COM & 0.99 ± 0.02 & 0.99 ± 0.08 & 0.43 ± 0.02 & \textbf{0.21 ± 0.02} & 0.05 ± 0.09 & 0.53 ± 0.39 \\
MultipleModels-IOM & \textbf{1.02 ± 0.02} & \textbf{1.01 ± 0.04} & 0.53 ± 0.04 & 0.16 ± 0.08 & 0.10 ± 0.12 & 0.56 ± 0.40 \\
MultipleModels-RoMA & \textbf{1.02 ± 0.02} & \textbf{1.01 ± 0.04} & 0.53 ± 0.04 & 0.16 ± 0.08 & 0.10 ± 0.12 & 0.56 ± 0.40 \\
MultipleModels-ICT & 0.99 ± 0.02 & 0.99 ± 0.08 & 0.43 ± 0.02 & \textbf{0.21 ± 0.02} & 0.05 ± 0.09 & 0.53 ± 0.39 \\
MultipleModels-TriMentoring & 0.99 ± 0.02 & 0.99 ± 0.08 & 0.43 ± 0.02 & \textbf{0.21 ± 0.02} & 0.05 ± 0.09 & 0.53 ± 0.39 \\ \midrule
ParetoFlow-Vallina & 0.70 ± 0.05 & 0.67 ± 0.01 & 0.73 ± 0.04 & 0.00 ± 0.00 & \textbf{0.84 ± 0.01} & 0.59 ± 0.30 \\
Diffusion-Guidance-Crowding & 0.71 ± 0.08 & 0.86 ± 0.05 & 0.92 ± 0.06 & 0.00 ± 0.00 & 0.00 ± 0.00 & 0.50 ± 0.41 \\
Diffusion-Guidance-SubCrowding & 0.39 ± 0.07 & 0.18 ± 0.10 & 0.36 ± 0.08 & 0.00 ± 0.00 & 0.00 ± 0.00 & 0.19 ± 0.17 \\
\bottomrule
\end{tabular}
}
\end{table}

\begin{table}[htbp]
\centering
\caption{$\mathrm{MMD}(\hat{P}_{\mathrm{alg}},\hat{P}_{\mathrm{off}})$ results of DTLZ tasks ($\uparrow$). Each algorithm is run for $5$ seeds and evaluated on $256$ designs. Boldface marks the methods with the largest distance.}
\vspace{\baselineskip}
\label{MMD+:DTLZ}
\resizebox{\linewidth}{!}{%
\begin{tabular}{lllllllll}
\toprule
 & DTLZ1 & DTLZ2 & DTLZ3 & DTLZ4 & DTLZ5 & DTLZ6 & DTLZ7 & Avg. \\
\midrule
MultiHead-Vallina & \textbf{0.80 ± 0.06} & \textbf{0.41 ± 0.02} & 0.32 ± 0.23 & \textbf{0.37 ± 0.02} & \textbf{0.59 ± 0.01} & \textbf{0.74 ± 0.02} & \textbf{0.84 ± 0.10} & \textbf{0.58 ± 0.20} \\
MultiHead-PcGrad & 0.46 ± 0.05 & 0.33 ± 0.06 & 0.02 ± 0.04 & 0.06 ± 0.08 & 0.56 ± 0.01 & 0.66 ± 0.02 & 0.83 ± 0.06 & 0.42 ± 0.28 \\
MultiHead-GradNorm & 0.66 ± 0.16 & 0.15 ± 0.14 & \textbf{0.33 ± 0.41} & 0.14 ± 0.18 & 0.42 ± 0.27 & 0.64 ± 0.12 & 0.84 ± 0.15 & 0.45 ± 0.25 \\
MultipleModels-Vallina & \textbf{0.80 ± 0.06} & \textbf{0.41 ± 0.02} & 0.32 ± 0.23 & \textbf{0.37 ± 0.02} & \textbf{0.59 ± 0.01} & \textbf{0.74 ± 0.02} & \textbf{0.84 ± 0.10} & \textbf{0.58 ± 0.20} \\
MultipleModels-COM & 0.45 ± 0.07 & 0.37 ± 0.01 & 0.00 ± 0.00 & 0.22 ± 0.16 & 0.54 ± 0.02 & 0.59 ± 0.02 & 0.68 ± 0.01 & 0.41 ± 0.22 \\
MultipleModels-IOM & 0.45 ± 0.04 & 0.37 ± 0.02 & 0.00 ± 0.00 & 0.15 ± 0.12 & 0.55 ± 0.02 & 0.57 ± 0.03 & 0.72 ± 0.02 & 0.40 ± 0.23 \\
MultipleModels-RoMA & 0.45 ± 0.04 & 0.37 ± 0.02 & 0.00 ± 0.00 & 0.15 ± 0.12 & 0.55 ± 0.02 & 0.57 ± 0.03 & 0.72 ± 0.02 & 0.40 ± 0.23 \\
MultipleModels-ICT & 0.45 ± 0.07 & 0.37 ± 0.01 & 0.00 ± 0.00 & 0.22 ± 0.16 & 0.54 ± 0.02 & 0.59 ± 0.02 & 0.68 ± 0.01 & 0.41 ± 0.22 \\
MultipleModels-TriMentoring & 0.45 ± 0.07 & 0.37 ± 0.01 & 0.00 ± 0.00 & 0.22 ± 0.16 & 0.54 ± 0.02 & 0.59 ± 0.02 & 0.68 ± 0.01 & 0.41 ± 0.22 \\ \midrule
ParetoFlow-Vallina & 0.48 ± 0.04 & 0.02 ± 0.05 & 0.00 ± 0.00 & 0.00 ± 0.00 & 0.23 ± 0.13 & 0.51 ± 0.08 & 0.33 ± 0.09 & 0.22 ± 0.21 \\
Diffusion-Guidance-Crowding & 0.54 ± 0.02 & 0.00 ± 0.00 & 0.00 ± 0.00 & 0.04 ± 0.08 & 0.00 ± 0.00 & 0.65 ± 0.04 & 0.73 ± 0.07 & 0.28 ± 0.32 \\
Diffusion-Guidance-SubCrowding & 0.54 ± 0.02 & 0.00 ± 0.00 & 0.00 ± 0.00 & 0.00 ± 0.00 & 0.00 ± 0.00 & 0.33 ± 0.03 & 0.30 ± 0.05 & 0.17 ± 0.21 \\
\bottomrule
\end{tabular}
}
\end{table}

\begin{table}[htbp]
\centering
\caption{MOO metrics on ZDT and DTLZ tasks. Results are reported for the best-performing evolutionary (Evo.) and generative (Gen.) methods. Each algorithm is run for $5$ seeds and evaluated on $256$ designs. Boldface indicates the best value.}
\vspace{\baselineskip}
\label{tab:moometrics-full}
\resizebox{\linewidth}{!}{%
\begin{tabular}{l l c c c c c c c c c c c c}
\toprule
 & & ZDT1 & ZDT2 & ZDT3 & ZDT4 & ZDT6 & DTLZ1 & DTLZ2 & DTLZ3 & DZLZ4 & DTLZ5 & DZLZ6 & DTLZ7 \\
\midrule

\multirow{2}{*}{HV ($\uparrow$)}
& Evo.
& \textbf{4.83 ± 0.00} & 5.57 ± 0.05 & 5.59 ± 0.06 & 4.74 ± 0.31 & 4.78 ± 0.00 & \textbf{10.64 ± 0.00} & \textbf{12.44 ± 0.00} & \textbf{9.89 ± 0.00} & \textbf{17.70 ± 0.01} & \textbf{10.76 ± 0.00} & \textbf{10.95 ± 0.01} & \textbf{10.74 ± 0.01}\\
& Gen.
& 4.53 ± 0.03 &  \textbf{5.67 ± 0.19} & \textbf{5.60 ± 0.05} & \textbf{5.03 ± 0.06} & \textbf{4.82 ± 0.01} & \textbf{10.64 ± 0.00} & 12.39 ± 0.01 & \textbf{9.89 ± 0.00} & 17.60 ± 0.03 & 10.59 ± 0.02 & 10.80 ± 0.03 & 9.78 ± 0.08\\

\midrule
\multirow{2}{*}{GD+ ($\downarrow$)}
& Evo.
& \textbf{0.01 ± 0.00} & \textbf{0.05 ± 0.03} & \textbf{0.17 ± 0.02} & 0.68 ± 0.08 & \textbf{0.08 ± 0.07} & \textbf{0.25 ± 0.04} & \textbf{0.02 ± 0.01} & \textbf{0.38 ± 0.16} & \textbf{0.18 ± 0.04} & \textbf{0.00 ± 0.00} & \textbf{0.46 ± 0.01} & \textbf{0.08 ± 0.07}\\
& Gen.
&  0.29 ± 0.03 & 0.18 ± 0.02 & 0.24 ± 0.02 & \textbf{0.39 ± 0.06} & 0.17 ± 0.00 & 0.40 ± 0.01 & 0.16 ± 0.03 & 0.52 ± 0.00 & 0.22 ± 0.00 & 0.16 ± 0.01 & 0.60 ± 0.02 & 0.35 ± 0.02\\

\midrule
\multirow{2}{*}{IGD+ ($\downarrow$)}
& Evo.
&  \textbf{0.01 ± 0.00} & \textbf{0.04 ± 0.01} & 0.12 ± 0.01 & 0.38 ± 0.14 & 0.02 ± 0.00 & \textbf{0.08 ± 0.01} & \textbf{0.01 ± 0.00} & \textbf{0.09 ± 0.02} & \textbf{0.06 ± 0.01} & \textbf{0.00 ± 0.00} & 0.40 ± 0.02 & \textbf{0.02 ± 0.00}\\
& Gen.
&  0.13 ± 0.02 & 0.10 ± 0.02 & \textbf{0.10 ± 0.01} & \textbf{0.25 ± 0.04} & \textbf{0.01 ± 0.01} & 0.09 ± 0.03 & 0.12 ± 0.02 & 0.18 ± 0.02 & 0.20 ± 0.04 & 0.08 ± 0.01 & \textbf{0.24 ± 0.05} & 0.19 ± 0.02\\

\midrule
\multirow{2}{*}{MMD ($\uparrow$)}
& Evo.
& \textbf{1.02 ± 0.02} & \textbf{1.01 ± 0.04} & \textbf{0.95 ± 0.16} & \textbf{0.21 ± 0.02} & 0.14 ± 0.26 & \textbf{0.80 ± 0.06} & \textbf{0.41 ± 0.02} & \textbf{0.33 ± 0.41} & \textbf{0.37 ± 0.02} & \textbf{0.59 ± 0.01} & \textbf{0.74 ± 0.02} & \textbf{0.84 ± 0.10}\\
& Gen.
&  0.71 ± 0.08 & 0.86 ± 0.05 & 0.92 ± 0.06 & 0.00 ± 0.00 & \textbf{0.84 ± 0.01} & 0.54 ± 0.02 & 0.02 ± 0.05 & 0.00 ± 0.00 & 0.04 ± 0.08 & 0.23 ± 0.13 & 0.65 ± 0.04 & 0.73 ± 0.07\\
\bottomrule
\end{tabular}
}
\end{table}

\end{document}